\newcommand{\LL}{\mathcal{L}}
\newcommand{\FF}{\mathcal{F}}
\newcommand{\CC}{\mathcal{C}}
\newcommand{\Cond}{\mathsf{Cond}}
\newcommand{\Var}{\mathsf{Var}}
\newcommand{\op}{\mathsf{op}}
\newcommand{\ie}{\textit{i.e.}~}
\newcommand{\lrarr}{\longrightarrow}
\newcommand{\rarr}{\rightarrow}
\newcommand{\Set}{\mathbf{Set}}
\newcommand{\plusminus}{\pm}
\newcommand{\IFF}{\; \Longleftrightarrow \;}
\newcommand{\Image}{\mathrm{Im}}
\begin{document}

\title{Semantic Unification}
\subtitle{A sheaf theoretic approach to natural language}

\author{Samson Abramsky and Mehrnoosh Sadrzadeh}

\authorrunning{Abramsky and Sadrzadeh}
	
\institute{Department of Computer Science, University of Oxford \\ School of Electronic Engineering and Computer Science, Queen Mary University of London\\ \ \\ 
{\tt samson.abramsky@cs.ox.ac.uk}\\ {\tt mehrnoosh.sadrzadeh@eecs.qmul.ac.uk}}

\maketitle

\begin{abstract}
Language is contextual and sheaf theory provides a high level mathematical framework to model contextuality.   We show how sheaf theory can model  the contextual nature of natural language and how  gluing can be used to provide a global semantics for a discourse by putting together the local logical semantics of each sentence within the discourse.  
We introduce a presheaf structure corresponding to a basic form of Discourse Representation Structures.
Within this setting, we formulate a notion of \emph{semantic unification} --- gluing meanings of parts of a discourse into a coherent whole --- as a form of sheaf-theoretic gluing. We illustrate this idea with a number of examples where it can used to represent resolutions of anaphoric references. We also discuss multivalued gluing, described using a distributions functor, which can be used to represent situations where multiple gluings are possible, and where we may need to rank them using quantitative measures.
\begin{center}
\textbf{Dedicated to Jim Lambek on the occasion of his 90th birthday.}
\end{center}
\end{abstract}

\section{Introduction}

Contextual models of language originate from the work of  Harris \cite{Harris}, who argued that  grammatical roles  of words can be learnt from their  linguistic contexts and  went on to test his theory on learning of morphemes.  Later, contextual models were also applied to learn meanings of words, based on the frequency of their occurrence in document copora; these  gave rise to  the distributional models of meaning \cite{Firth}.  Very recently, it was shown how one can combine the contextual models of meaning with formal models of grammars, and in particular pregroup grammars \cite{Lambek}, to obtain a compositional distributional semantics for natural language \cite{Coecke}. 

One can study the contextual nature of language from yet another perspective: the inter-relationships between the meanings of the properties expressed by a discourse.  This  allows for the local information expressed by individual properties to be glued to each other and to form a global semantics for the whole discourse. A  very representative example is  anaphora, where two language units that may occur in different, possibly far apart, sentences,  refer to one another and the meaning of the whole discourse cannot be determined without resolving what is referring to what.   Such phenomena occur  in plenty in everyday discourse, for example there are four anaphoric pronouns in the following extract   from a  BBC news article on 16th of May 2013:

\begin{quote}
One of Andoura's earliest memories is making soap with his grandmother. She was from a family of traditional Aleppo soap-makers and  handed down a closely-guarded recipe [$\cdots$] to him. Made from mixing oil from laurel trees [$\cdots$], it  uses no chemicals or other additives. 
\end{quote}

Anaphoric phenomena are also to blame for    the complications behind the infamous Donkey sentences `If a farmer owns a donkey, he beats it.' \cite{Geach}, where the usual Montgue-style language to logic translations fail  \cite{Montague} .  The first widely accepted framework that provided a formal solution   to these challenges was  Discourse Representation Theory (DRT) \cite{Kamp}. DRT was later turned compositional in the setting of  Dynamic Predicate Logic (DPL) \cite{Groendijk} and  extended to polarities to gain more expressive power, using actions of modules on monoids \cite{Visser}. However, the  problem with these solutions is the standard criticism made to Montague-style semantics: they treat meanings of words as vacuous relations over an indexical sets of variables.

The motivation behind this paper is two-fold. Firstly,  the first author has been working on sheaf theory   to reason about contextual phenomena as sheaves provide a natural way of gluing  the  information  of local sections to obtain  a  consistent global view of the whole situation. Originally introduced in algebraic topology,  recently they have  been used to model the contextual phenomena  in other fields such as  in quantum physics \cite{AB11,AbramskyHardy}  and  in database theory \cite{Abramsky}. Based on these and aware of the contextual nature of natural language,  the first author conjectured a possible application of sheaves to natural language. Independently, during a research visit to McGill in summer of 2009,  the second author was encouraged by Jim Lambek to look at DRT and DPL as alternatives  to Montague semantics and was  in particular  pointed to the capacities of  these dynamic structures   in providing a formal model of anaphoric reference in natural language.  In this paper, we bring  these two ideas together and show how a sheaf theoretic interpretation of DRT    allows us to unify   semantics of individual discourses via  gluing  and provide semantics for the whole discourse.  We  first use the sheaf theoretic interpretation of the existing machinery of  DRT and apply the setting to  resolve  \emph{constraint-based} anaphora.  We then show how the composition of the sheaf functor with a probability distribution functor  can be used to resolve the so called \emph{preferential} anaphora. In such cases,  more than one possible resolution is possible and frequencies of  occurrences  of discourse units  from document corpora and the principle of maximal entropy will help choose the most common solution.

\section{Sheaves}

We recall some preliminary definitions. A category ${\cal C}$ has objects and morphisms. We use $A, B, C$ to denote the objects and $f,g$ to denote the morphisms. Examples of morphisms are  $f \colon A \to B$ and $g \colon B \to C$.  Each object $A$ has an identity morphism, denoted by $Id_A \colon A \to A$.  The morphisms are closed under composition: given $f \colon A \to B$ and $g \colon B \to C$, there is a morphism $g \circ f \colon A \to C$. Composition is associative, with identity morphisms as units.

A covariant functor $F$ from a category ${\cal C}$ to a category ${\cal D}$ is a map $F \colon {\cal C} \to {\cal D}$, which  assigns to each object $A$ of ${\cal C}$ an object $F(A)$ of ${\cal D}$ and to each morphism  $f \colon A \to B$ of ${\cal C}$, a morphism $F(f) \colon F(A) \to F(B)$ of ${\cal D}$.  Moreover, it preserves the identities and the compositions of ${\cal C}$. That is, we have $F(Id_A) = Id_{F(A)}$ and $F(g \circ f) = F(g) \circ F(f)$. A contravariant functor reverses the order of morphisms, that is,  for $F \colon  {\cal C} \to {\cal D}$  a contravariant functor and $f \colon A \to B$ in ${\cal C}$, we have $F(f) \colon    F(B) \to F(A)$ in ${\cal D}$. 

Two examples of a category are the category  ${\bf Set}$ of sets and functions and the category ${\bf Pos}$ of posets and monotone maps.

A presheaf is a contravariant functor from a  small category $\CC$ to the category  of sets and functions, which means that it is a functor on   the \emph{opposite} (or dual) category of ${\cal C}$:
\[
F \colon {\cal C}^{op} \to {\bf Set}
\]
This functor assigns a set $F(A)$ to  each object $A$ of ${\cal C}$. To each morphism $f \colon A \to B$ of ${\cal C}$, it assigns a function $F(f) \colon F(B) \to F(A)$, usually referred to as a \emph{restriction map}. For each $b \in F(B)$, these are denoted as follows:
\[
F(f)(b)\ =  \  b \mid_f .
\]
 Since $F$ is a functor, it follows  that the restriction of an identity   is an identity, that is  for  $a \in A$ we have:
 \[
 F(Id_A)(a) \; = \;  a \mid_{Id_A} \; = \; a.
 \]
Moreover, the restriction of a composition $F(g \circ f) \colon F(C) \to F(A)$ is the  composition  of the  restrictions  $F(f) \circ F(g)$ for $f : A \to B$ and $g : B \to C$. That is for  $c \in C$ we have:
 \[
 F(g \circ f) (c) \; = \; c \mid_{g \circ f} \; = \; (c \mid_g) \mid_f .
 \]

The original setting for sheaf theory was topology, where the domain category $\CC$ is the poset of open subsets of a topological space $X$ under set inclusion. In this case, the arrows of $\CC$ are just the inclusion maps $i : U \rinc V$; and restriction along such a map can rewritten unambiguously by specifying the domain of $i$; thus for $U \subseteq V$ and $s \in F(V)$, we write $s |_{U}$.

The elements of $F(U)$ --- `the presheaf at stage $U$' --- are called  \emph{sections}. In the topological case, a presheaf is a sheaf iff it satisfies the following condition:
\begin{quote}
Suppose we are given a family of open subsets $U_i \subseteq U$ such that $\bigcup_i U_i = U$,  i.e. the family $\{U_i\}$  covers $U$. Suppose moreover that we are given   a family of sections $\{s_i \in F(U_i)\}$ that are compatible, that is for all $i,j$ the  two sections $s_i$ and $s_j$ agree on the intersection of two subsets $U_i$ and $U_j$, so that we have:
\[
s_i \mid_{U_i \cap U_j} = s_j \mid_{U_i \cap U_j} .
\]
Then  there exists a   unique section $s \in F(U)$ satisfying the following \emph{gluing condition}:
\[ s \mid_{U_i} = s_i \quad \mbox{for all $i$.}  \]
\end{quote}
Thus in a sheaf, we can always unify or glue compatible local information together in  a unique way to obtain a global section.

\section{Discourse Representation Theory and Anaphora}

We shall assume a background first-order language $\LL$ of relation symbols. There are no constants or function symbols in $\LL$.

 In Discourse Representation Theory (DRT), every discourse  $K$ is represented by a  Discourse Representation Structure (DRS). Such a structure is a pair of a set $U_K$ of discourse referents and a set $\Cond_K$ of  DRS conditions:
\[
(U_K, \Cond_K).
\]
Here we take $U_K$ to be simply a finite subset of $\Var$, the set of first-order variables. For the purpose of this paper, we can restrict this set to the set of referents. 

A \emph{basic DRS} is one in which the condition $\Cond_K$ is a set of first-order literals, \ie atomic sentences or their negations, over the set of variables $U_K$ and the relation symbols in $\LL$.

The full class of DRS\footnote{Note that we write DRS for the plural `Discourse representation Structures', rather than the clumsier `DRSs'.} is defined by mutual recursion over DRS and DRS conditions:
\begin{itemize}
\item If $X$ is a finite set of variables and $C$ is a finite set of DRS conditions, $(X, C)$ is a DRS.
\item A literal is a DRS condition.
\item If $K$ and $K'$ are DRS, then $\neg K$, $K \Rightarrow K'$ and $K \vee K'$ are DRS conditions.
\item If $K$ and $K'$ are DRS and $x$ is a variable, $K(\forall x)K'$ is a DRS condition.
\end{itemize}

Our discussion in the present paper will refer only to basic DRS. However, we believe that our approach extends to the general class of DRS.
Moreover, our semantic unification construction to some extent obviates the need for the extended forms of DRS conditions.

The structure corresponding to  a discourse followed  by another is obtained by a merge and a unification of the structures of each discourse. The merge of two DRS $K$ and $K'$ is defined as their disjoint union, defined below:
\[
K \oplus K' \ := \ (U_K \uplus U_{K'}, Cond_K \uplus  Cond_{K'})
\]
A merge is followed by a unification (also called matching or presupposition resolution), where certain referents are equated with each other. A unification is performed according to a set of accessibility constraints,  formalising various different ways linguistics deal with endophora resolution. These include constraints such as  as c-commanding, gender agreement, syntactic and semantic consistency  \cite{Mitkov}.

An example where anaphora  is fully resolved is `John owns a donkey. He beats it.'. The merge of the DRS  of each discourse of this example is:
\begin{align*}
\Big(\{x,y\}, \{John(x), Donkey(y), Own(x,y)\}\Big) \quad \oplus \quad \Big(\{\underline{v},\underline{w}\}, \{Beat({\underline{v}},\underline{w})\}\Big)&\\
= \Big(\{x,y,\underline{v},\underline{w}\}, \{John(x), Donkey(y), Own(x,y), Beat(\underline{v},\underline{w})\}\Big)
\end{align*}
Here, $\underline{v}$ can access $x$ and has agreement with it, hence  we  unify  them by equating $\underline{v} = x$. Also $\underline{w}$ can access $y$ and has agreement with it, hence we unify them as well by equating $\underline{w}=y$. As a  result  we obtain the   following DRS:
\[
\Big(\{x,y\}, \{John(x), Donkey(y), Own(x,y), Beat(x,y)\}\Big)
\]
An example where anaphora is partially resolved is  `John does not own a donkey. He beats it.', the DRS of which is as follows:
\[
\left(\{x\}, \{John(x), \neg \left(\{y\} ,\{Donkey(y), Own(x,y)\}\right)\}\right) \ \oplus \left(\{\underline{v}, \underline{w}\}, \{Beat(\underline{v},\underline{w})\}\right)
\]
 Here $\underline{v}$ can be equated with $x$, but $\underline{w}$ cannot be equated with $y$, since $y$ is in a nested DRS and cannot  be accessed by $\underline{w}$. Hence, anaphora is not fully resolved. 
 
 The  unification  step enables the DRT to model and resolve contextual language phenomena  by going from local to global conditions: it will make certain properties which held about  a subset of  referents,  hold about the whole set of referents.  This is exactly the local to global passage modelled by gluing in sheaves.

\section{From Sheaf Theory To  Anaphora}

\subsection{A presheaf for basic DRS}

We begin by defining a presheaf $\FF$ which represents basic DRS.

We define the category $\CC$ to have as objects pairs $(L, X)$ where
\begin{itemize}
\item $L \subseteq \LL$ is a finite vocabulary of relation symbols.
\item $X \subseteq \Var$ is a finite set of variables.
\end{itemize}
A morphism $\iota, f : (L, X) \lrarr (L', X')$ comprises
\begin{itemize}
\item An inclusion map $\iota : L \rinc L'$
\item A function $f : X \lrarr X'$.
\end{itemize}
Note that we can see such functions $f$ as performing several r\^oles:
\begin{itemize}
\item They can witness the inclusion of one set of variables in another.
\item They can describe relabellings of variables (this will become of use when quantifiers are introduced).
\item They can indicate where variables are being identified or merged; this happens when $f(x) = z = f(y)$.
\end{itemize}
We shall generally omit the inclusion map, simply writing morphisms in $\CC$ as $f : (L, X) \lrarr (L',X')$, where it is understood that $L \subseteq L'$.

The functor $\FF : \CC^{\op} \lrarr \Set$ is defined as follows:
\begin{itemize}
\item For each object $(L, X)$ of $\CC$, $\FF(L, X)$ will be the set of deductive closures of consistent  finite sets of literals over $X$ with respect to the vocabulary $L$.
\item For each morphism $f : (L, X) \rarr (L', Y)$, the restriction operation $\FF(f) : \FF(L', Y) \rarr \FF(L, X)$ is defined as follows. For $s \in \FF(Y)$ and $L$-literal $\plusminus A(\vec{x})$ over $X$:
\[ \FF(f)(s) \vdash \plusminus A(\vec{x}) \IFF s \vdash \plusminus A(f(\vec{x})) . \]
\end{itemize}

The functoriality of $\FF$ is easily verified. Note that deductive closures of finite sets of literals are finite up to logical equivalence. Asking for deductive closure is mathematically convenient, but could be finessed if necessary.

The idea is that a basic DRS $(X, s)$ with relation symbols in $L$ will correspond to $s \in \FF(L, X)$ in the presheaf --- in fact, to an object of the \emph{total category} associated to the presheaf \cite{MM92}.

\subsection{Gluing in $\FF$}

Strictly speaking, to develop sheaf notions in $\FF$, we should make use of a Grothendieck topology on $\CC$ \cite{MM92}.
In the present, rather short and preliminary account, we shall work with concrete definitions which will be adequate to our purposes here. 

We shall consider \emph{jointly surjective} families of maps $\{ f_i : (L_i, X_i) \lrarr (L, X) \}_{i \in I}$, \ie such that $\bigcup_i \Image f_i \; = \; X$; and also $L = \bigcup_i L_i$.

We can think of such families as specifying \emph{coverings} of $X$, allowing for relabellings and identifications.

We are given a family of elements (sections) $s_i \in \FF(L_i, X_i)$, $i \in I$. Each section $s_i$ is giving information local to $(L_i, X_i)$. A \emph{gluing} for this family, with respect to the cover $\{ f_i \}$, is an element $s \in \FF(L, X)$ --- a section which is \emph{global} to the whole of $(L, X)$ --- such that $\FF(f_i)(s) = s_i$ for all $i \in I$.

We shall interpret this construction as a form of \emph{semantic unification}. We are making models of the meanings of parts of a discourse, represented by the family $\{ s_i \}$, and then we glue them together to obtain a representation of the meaning of the whole discourse.
The gluing condition provides a general and mathematically robust way of specifying the adequacy of such a representation, with respect to the local pieces of information, and the identifications prescribed by the covering.

We have the following result for our presheaf $\FF$.

\begin{proposition}
\label{uniqprop}
Suppose we are given a cover $\{ f_i : (L_i, X_i) \lrarr (L, X) \}$.
If a gluing $s \in \FF(X)$ exists for a family $\{s_i \in \FF(L_i, X_i) \}_{i \in I}$ with respect to this cover, it is unique.
\end{proposition}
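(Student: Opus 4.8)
The plan is to argue by \emph{extensionality of sections}. Every element of $\FF(L,X)$ is a deductively closed, consistent set of $L$-literals over $X$, so two sections coincide precisely when they entail the same literals. Hence, if $s, s' \in \FF(L,X)$ are both gluings of the family $\{s_i\}$ with respect to the cover $\{f_i\}$, it suffices to show that for every $L$-literal $\pm A(\vec{y})$ over $X$ we have $s \vdash \pm A(\vec{y})$ if and only if $s' \vdash \pm A(\vec{y})$. The whole argument is therefore driven down to a literal-by-literal comparison.

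Next I would reduce each such literal to the local data. Fix a literal $\pm A(\vec{y})$ with $A \in L$ and $\vec{y} = (y_1, \dots, y_n)$ a tuple over $X$. The strategy is to locate an index $i \in I$ together with a tuple $\vec{x}$ over $X_i$ such that $A \in L_i$ and $f_i(\vec{x}) = \vec{y}$. Granting such a witness, the definition of the restriction map gives $s \vdash \pm A(\vec{y}) = \pm A(f_i(\vec{x}))$ if and only if $\FF(f_i)(s) \vdash \pm A(\vec{x})$, and the gluing condition $\FF(f_i)(s) = s_i$ rewrites this as $s_i \vdash \pm A(\vec{x})$. The identical chain applied to $s'$ yields $s' \vdash \pm A(\vec{y})$ if and only if $s_i \vdash \pm A(\vec{x})$. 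Comparing the two, both $s$ and $s'$ agree with $s_i$ on this literal, hence with each other; since the literal was arbitrary, extensionality delivers $s = s'$.

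The main obstacle is precisely the existence of the witnessing index invoked above. Joint surjectivity, $\bigcup_i \Image f_i = X$, guarantees only that each \emph{individual} variable $y_j$ lies in the image of \emph{some} $f_{i_j}$, and $L = \bigcup_i L_i$ guarantees only that $A$ lies in \emph{some} $L_i$; the argument, however, needs a \emph{single} index $i$ that simultaneously carries the relation symbol $A$ and admits a lift of the \emph{entire} tuple $\vec{y}$. I expect this coherence requirement to be the crux: absent it, a literal $\pm A(\vec{y})$ touched by no single patch could be toggled freely without disturbing any restriction $\FF(f_i)(s)$ (since the relevant literals are logically independent), thereby producing a second gluing and defeating uniqueness. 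I would therefore close the gap by appealing to the intended DRS covers, in which each literal originates from one sentence so that its symbol and all its arguments already live in a common $(L_i, X_i)$, or equivalently by reading into the notion of cover the demand that every $L$-literal over $X$ lift through some $f_i$. With that in hand, the routine verification of the previous paragraph finishes the proof.
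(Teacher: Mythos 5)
Your route is genuinely different from the paper's. The paper's proof is a single construction: it defines $s$ to be the deductive closure of $\{ \plusminus A(f_i(\vec{x})) \mid \plusminus A(\vec{x}) \in s_i,\ i \in I \}$ and asserts that if this particular section is consistent and restricts to each $s_i$, it is the unique gluing. You instead take two arbitrary gluings and compare them literal by literal, which is legitimate since sections are determined by the literals they entail. The two arguments converge on the same pressure point: the gluing conditions do force any gluing to entail every pushed-forward literal $\plusminus A(f_i(\vec{x}))$ with $\plusminus A(\vec{x}) \in s_i$, so every gluing \emph{contains} the paper's canonical section; but neither the paper's one-line proof nor joint surjectivity supplies the converse inclusion, namely that a gluing entails nothing more. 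That missing converse is exactly your ``witnessing index'' problem, so your diagnosis of the crux is correct --- and it is a genuine gap in the proposition as literally stated (and in the paper's proof), not a defect of your argument. A literal whose relation symbol and whole argument tuple cannot be lifted along a \emph{single} $f_i$ is invisible to every restriction map, so it can be toggled freely without disturbing consistency or any condition $\FF(f_i)(s) = s_i$. The paper's own Example 2 witnesses this: for the cover $f_1 : x \mapsto a$, $f_2 : y \mapsto b$, $f_3 : u \mapsto a, v \mapsto b$, adding $John(b)$ to the displayed ``unique'' gluing $\{ John(a), donkey(b), owns(a,b), beats(a,b) \}$ yields a second gluing, because $b \notin \Image f_1$ and $John \notin L_3$.

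One correction to your proposed repair: your two formulations are \emph{not} equivalent, and only the stronger one works. In the intended sentence-by-sentence DRS covers, each literal occurring in some \emph{local section} does live in a single patch --- that is true in Example 2 as well --- yet uniqueness still fails there, because the troublesome literals are those (like $John(b)$) occurring in \emph{no} section. What you actually need is your second condition: every $L$-literal over $X$, whether or not it occurs anywhere in the discourse, lifts through some $f_i$ (for instance, this holds when some single patch carries all of $L$ and maps onto $X$, as in Example 1). Under that hypothesis your chain $s \vdash \plusminus A(\vec{y}) \IFF s_i \vdash \plusminus A(\vec{x}) \IFF s' \vdash \plusminus A(\vec{y})$ is sound and uniqueness follows. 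The alternative repair, closer to what the paper's construction actually delivers, is to weaken the statement: among gluings that are generated by the pushed-forward local literals, there is at most one, and it is the canonical section of the paper's proof.
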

\begin{proof}
We define s as the deductive closure of 
\[ \{ \plusminus A(f_i(\vec{x})) \mid \plusminus A(\vec{x}) \in s_i, i \in I \} . \]
If $s$ is consistent and restricts  to $s_i$ along $f_i$ for each $i$, it is the unique gluing.
\end{proof}

\paragraph{Discussion and Example}
Note that, if the sets $L_i$ are \emph{pairwise disjoint}, the condition on restrictions will hold automatically if $s$ as constructed in the above proof is consistent.
To see how the gluing condition may otherwise fail, consider the following example.
We have $L_1 = \{ R, S \} = L_2 = L$, $X_1 = \{ x, u \}$, $X_2 = \{ y, v \}$, and $X = \{ z, w \}$. There is a  cover $f_i : (L_i, X_i) \lrarr (L, X)$, $i = 1, 2$, where $f_1 : x \mapsto z, u \mapsto w$, $f_2 : y \mapsto z, v \mapsto w$.
Then the sections $s_1 = \{ R(x), S(u) \}$, $s_2 = \{ S(y), R(v) \}$ do not have a gluing. The section $s$ constructed as in the proof of Proposition~\ref{uniqprop} will e.g. restrict along $f_1$ to $\{ R(x), S(x), R(u), S(u) \} \neq s_1$.
\subsection{Linguistic Applications}

We shall now discuss a number of examples in which semantic unification expressed as gluing of sections can be used to represent resolutions of anaphoric references.

In these examples, the r\^ole of \emph{merging} of discourse referents in DRT terms is represented by the specification of suitable cover; while the gluing represents merging at the semantic level, with the gluing condition expressing the semantic correctness of the merge.

Note that by Proposition~\ref{uniqprop}, the `intelligence' of the semantic unification operation is in the choice of cover; if the gluing exists relative to the specified cover, it is unique. Moreover, the vocabularies in the covers we shall consider will always be disjoint, so the only obstruction to existence is the consistency requirement.

\subsubsection*{Examples}

\begin{enumerate}
\item Consider firstly the discourse  `John sleeps. He snores.'
We have the local sections
\begin{eqnarray*}
 s_1 &=& \{ John(x), sleeps(x) \} \in \FF(\{ John, sleeps \}, \{ x\}), \\
  s_2 &=&  \{ snores(y) \} \in \FF(\{ snores \}, \{y\}) . 
  \end{eqnarray*}
To represent the merging of these discourse referents, we have the cover
\[ f_1 : \{ x \} \lrarr \{ z \} \longleftarrow \{ y \} . \]
A gluing of $s_1$ and $s_2$ with respect to this cover is given by
\[ s = \{ John(z), sleeps(z), snores(z) \} . \]

\item In intersentential  anaphora  both the anaphor and antecedent occur in one sentence.  An example is `John beats his donkey'. We can express the information conveyed in this sentence in three local sections:
\[ s_1 = \{ John(x) \}, \quad s_2 = \{ donkey(y) \}, \quad s_3 = \{owns(u,v), beats(u,v) \} \]
over $X_1 = \{x\}$, $X_2 = \{ y \}$ and $X_3 = \{ u, v \}$ respectively.

We consider the cover $f_i : X_i \lrarr \{ a, b \}$, $i=1,2,3$, given by
\[ f_1 : x \mapsto a, \quad f_2 : y \mapsto b, \quad f_3 : u \mapsto a, v \mapsto b . \]
The unique gluing $s \in \FF(\{ John, donkey, owns, beats \},\{ a, b \})$ with respect to this cover is
\[ s = \{ John(a), donkey(b), owns(a,b), beats(a, b) \} . \]

\item We illustrate the use of negative information, as expressed with negative literals, with the following example: `John owns a donkey. It is grey.'  The resolution method for this example is agreement;  we have to make it clear that `it' is a pronoun that does not refer to men. This is done using a negative literal. Ignoring for the moment the ownership predicate (which would have been dealt with in the same way as in the previous example), the local sections are as follows:
\[ s_1 = \{ John(x), Man(x) \}, \quad s_2 = \{ donkey(y), \neg Man(y) \}, \quad s_3 = \{ grey(z) \} \} . \]
Note that a cover which merged $x$ and $y$ would not have a gluing, since the consistency condition would be violated. However, using the cover 
\[ f_1 : x \mapsto a, \quad f_2 : y \mapsto b, \quad f_3 : z \mapsto  b , \]
we do have a gluing:
\[ s = \{ John(a), Man(a), donkey(b), \neg Man(b), grey(b) \} . \]

\item The following example illustrates the situation where we may have several plausible choices for covers with respect to which to perform gluing. 
Consider  `John put the cup on the plate. He broke it'. We can represent this by the following local sections
\[ s_1 = \{ John(x), Cup(y), Plate(z), PutOn(x, y, z) \}, \quad s_2 = \{ Broke(u, v) \} . \]
We can consider the cover given by the identity map on $\{ x, y, z \}$, and $u \mapsto x, v \mapsto y$; or alternatively, by $u \mapsto x, v \mapsto z$.

In the next section, we shall consider how such multiple possibilities can be ranked using quantitative information within our framework.
\end{enumerate}

\section{Probabilistic Anaphora}
Examples where  anaphora  cannot  be resolved by a constraint-based method  are plentiful, for instance in  `John has a brother. He is happy', or `John put a cd in the computer and copied it', or `John gave a donkey to Jim. James also gave him a dog', and so on. In such cases, although we are not sure which unit the anaphor refers to, we have some preferences. For instance in the first example,    it is more likely that `he' is referring to `John'. If instead we had `John has a brother. He is nice.', it would be more likely that `he' would be referring to `brother'. These considerations can be taken into account in a probabilistic setting. 

To model degrees of likelihood  of gluings, we compose our sheaf functor with a distribution functor as follows:
\[
{\CC}^{\op} \stackrel{\FF}{\lrarr} {\bf Set} \stackrel{D_R} {\lrarr} {\bf Set}
\]
The distribution functor is parameterized by a commutative semiring, that is a structure $(R, +, 0, \cdot, 1)$, where $(R, +, 0)$ and $(R, \cdot, 1)$ are commutative monoids, and we have the following distributivity property, for $x,y,z \in R$:
\[
x \cdot (y + z) = (x \cdot y) + (x \cdot z).
\]
Examples of semirings include the real numbers $\mathbb{R}$,  positive real numbers  $\mathbb{R}^+$,  and the booleans $\mathbf{ 2}$.  In the case of the reals and positive reals,  $+$ and $\cdot$ are addition and multiplication.  In the case of  booleans,  $+$ is disjunction and $\cdot$ is conjunction. 

Given a set $S$, we define $D_R(S)$ to be the set of functions $d : S \rarr R$ of finite support, such that
\[ \sum_{x \in S} d(x) = 1 . \]
For the  distribution functor over the booleans, $D(S)$ is the set of finite subsets of $S$,  hence $D$ becomes the finite powerset functor. 
To model probabilities,  we work with the distribution functor over $\mathbb{R}^+$. 
In this case, $D_R(S)$ is the set of finite-support probability measures over $S$.

The functorial action of $D_R$ is defined as follows. If $f : X \rarr Y$ is a function, then for $d \in D_R(X)$:
\[ D_R(f)(y) = \sum_{f(x)=y} d(x) . \]
This is the direct image in the boolean case, and the image measure in the probabilistic case.

\subsection{Multivalued Gluing}

If we now consider a family of probabilistic sections $\{ d_i \in D_R \FF(L_i,X_i) \}$, we can interpret the probability assigned by $d_i$ to each $s \in \FF(L_i, X_i)$ as saying how likely this condition is as the correct representation of the meaning of the part of the discourse the local section is representing.

When we consider this probabilistic case, there may be several possible gluings $d \in D_R \FF(L, X)$ of a given family with respect to a cover $\{ f_i : X_i \lrarr X \}$. 
We can use the principle of maximal entropy \cite{Jay54}, that is maximizing over 
$- \sum_{s \in \FF(L,X)} d(s) \log d(s)$,  to find out which of these sections is most probable. 
We can also use maximum entropy considerations to compare the likelihood of gluings arising from different coverings.

In the present paper, we shall study a more restricted situation, which captures a class of linguistically relevant examples.
We assume that, as before,  we have a family of deterministic sections $\{ s_i \in \FF(L_i, X_i) \}$, representing our preferred candidates to model the meanings of parts of a discourse. We now have a number of possible choices of cover, representing different possibilities for resolving anaphoric references.
Each of these choices $c$ will give rise to a different deterministic gluing $s_c \in \FF(L, X)$.
We furthermore assume that we have a distribution $d \in D_R \FF(L, X)$.
This distribution may for example have been obtained by statistical analysis of corpus data.

We can then use this distribution to rank the candidate gluings according to their degree of likelihood.
We shall consider an example to illustrate this procedure.

\subsection*{Example}
As an example consider the discourse:
\begin{quote}
John gave the bananas to the monkeys. They were ripe. They were cheeky.
\end{quote}

The meanings of the three sentences are represented by the following local sections:
\[ \begin{array}{lcl}
s_1 & = & \{John(x), Banana(y), Monkey(z), Gave(x,y,z)\}, \\
s_2 & = & \{ Ripe(u) \}, \\
s_3 & = & \{ Cheeky(v) \} . 
\end{array}
\]
There are four candidate coverings, represented by the following maps, which extend the identity on $\{ x, y, z \}$ in the following ways:
\[ c_1 : u \mapsto y, v \mapsto y \quad c_2 : u \mapsto y, v \mapsto z \quad c_3 : u \mapsto z, v \mapsto y \quad  c_4 : u \mapsto z, v \mapsto z . \]
These maps induce four candidate global sections, $t_1, \ldots , t_4$.
For example:
\[ t_1 = \{ John(x), Banana(y), Monkey(z), Gave(x,y,z), Ripe(y), Cheeky(y) \} . \]

We obtain  probability distributions for the coverings  using  the statistical method of \cite{Dagan}.  This method  induces  a grammatical relationship between the possible antecedents and the anaphors and obtains patterns for their  possible instantiations  by substituting  the  antecedents and anaphors into their assigned roles. It then counts how many times the lemmatised versions of the patterns  obtained from these substitutions   have occurred in a corpus. Each of these patterns correspond to a possible merging of referents.   The events we wish to assign probabilities  to are  certain combinations of  mergings of referents. The probability of each such event will be  the ratio of the sum of occurrences  of its mergings  to the total number of mergings in all events.  Remarkably,  these  events correspond to  the coverings of the sheaf model.

 In our example,  the sentences that contain the anaphors are predicative. Hence,  the induced relationship corresponding to their anaphor-antecedent pairs will be that of ``adjective-noun''. This  yields the  following four patterns, each corresponding to  a merging map, which is presented underneath it:
\begin{center}
\begin{tabular}{cccc}
 `ripe bananas', & `ripe monkeys', & `cheeky bananas',  & `cheeky monkeys'\\
 $u \mapsto y$  & $u \mapsto z$& $v \mapsto y$ & $v \mapsto z$
 \end{tabular}
 \end{center}
We query  the  \emph{British News corpus}  to obtain frequencies of the occurrences of the above patterns. This corpus is a collection of news stories from 2004 from each of the four major British newspapers: Guardian/Observer, Independent, Telegraph and Times. It contains 200 million words.  The  corresponding frequencies for these patterns are presented below:
\begin{center}
\begin{tabular}{lc}
 `ripe banana'  &  \quad 14\\
 `ripe monkey'  &   \quad 0\\
 `cheeky banana'  & \quad 0 \\
 `cheeky monkey'  & \quad 10
 \end{tabular}
\end{center}
The events are certain pairwaise combinations of the above,  namely  exactly the pairs whose mappings form a  covering. These coverings and  their probabilities are as follows: 
\begin{center}
\begin{tabular}{lcc}
{\sf \bf Event} &{\sf \bf Covering} & {\sf \bf Probability}\\
\hline
 `ripe banana' ,   `cheeky banana'   &\quad  $c_1 : u \mapsto y, v \mapsto y$ & \qquad 14/48 \\
 `ripe banana' ,   `cheeky monkey' &   \quad$c_2 : u \mapsto y, v \mapsto z$& \qquad (14+10)/ 48\\
 `ripe monkey' ,    `cheeky banana'  &  \quad$c_3 : u \mapsto z, v \mapsto y$& \qquad 0 \\
 `ripe monkey' ,  `cheeky monkey'  &     \quad$c_4 : u \mapsto z, v \mapsto z$& \qquad 10/48
\end{tabular}
\end{center}
These probabilities result in a  probability    distribution $d \in D_R \FF(L, X)$ for the gluings. The  distribution for the case of our example is as follows:

\begin{center}
\begin{tabular}{l|c|c}
$i$ &  $t_i$ & $d(t_i)$ \\
\hline
1&  $\{John(x), Banana(y), Monkey(z), Gave(x,y,z), Ripe(y), Cheeky(y)\}$ & 0.29 \\
2&  $\{John(x), Banana(y), Monkey(z), Gave(x,y,z), Ripe(y), Cheeky(z)\}$ & 0.5\\
3&  $\{John(x), Banana(y), Monkey(z), Gave(x,y,z), Ripe(z), Cheeky(y)\}$& 0\\
4&  $\{John(x), Banana(y), Monkey(z), Gave(x,y,z), Ripe(z), Cheeky(z)\}$& 0.205\\
\end{tabular}
\end{center} 
We can now select the candidate resolution $t_2$ as the most likely with respect to $d$. 

\section{Conclusions and Future Work}
We have shown how sheaves and gluing can be used to model the contextual nature of language, as represented by DRT and unification. We  provided examples of the constraint-based anaphora  resolution  in this setting and   showed how a move to preference-based cases is possible  by composing the sheaf functor with a distribution functor, which  enables one to    choose between a number of possible resolutions. 

There are a number of interesting directions for future work:
\begin{itemize}
\item We aim to extend our sheaf-theoretic  treatment of DRT to  its logical operations. The model-theoretic semantics of DRS has an intuitionistic flavour, and we aim to develop a sheaf-theoretic form of this semantics.
\item The complexity of anaphora resolution has been a concern for linguistics; in our setting we can approach this matter by characterizing the complexity of finding a gluing. The recent work in \cite{AGK13} seems relevant here.
\item We would like to experiment with different statistical ways of learning the distributions of DRS conditions on large scale corpora and real linguistic tasks, in the style of \cite{Grefen}, and how this can be fed back into the sheaf-theoretic approach, in order to combine the strengths of structural and statistical methods in natural language semantics.
\end{itemize}

\end{document}